\newtheorem{thm}{Theorem}
\newcommand{\bE}{\mathbb{E}}
\newcommand{\bP}{\mathbb{P}}
\newcommand{\bR}{\mathbb{R}}
\newcommand{\bN}{\mathbb{N}}
\newcommand{\sF}{{\mathcal F}} 
\newcommand{\tnhat}{\hat{\theta}_n}
\newcommand{\argmin}{\operatornamewithlimits{arg\ min}}
\newcommand{\argmax}{\operatornamewithlimits{arg\ max}}
\newcommand{\ind}[1]{\mathbf{1}_{#1}} 
\newcommand{\set}[1]{\left\{#1\right\}}
\newcommand{\norm}[1]{\left\|#1\right\|}
\def\ie{{\em i.e.,~}}
\def\eg{{\em e.g.,~}}
\begin{document}

\title{Distance-Penalized Active Learning Using \\
Quantile Search}

\author{John~Lipor$^{1}$, Brandon Wong$^{2}$, Donald Scavia$^{3}$, Branko Kerkez$^{2}$, and Laura Balzano$^{1}$ \\
    $^{1}$Department of Electrical and Computer Engineering, \\
    $^{2}$Department of Civil and Environmental Engineering, 
    $^{3}$School of Natural Resources and Environment \\
    University of Michigan, Ann Arbor \\
    \{lipor,bpwong,scavia,bkerkez,girasole\}@umich.edu
    \thanks{This work was supported by NSF ECCS 1342121, F031543-071159-Graduate Research Fellowship Program, and the MCubed Program at the University of Michigan.}
}

\maketitle

\begin{abstract}
Adaptive sampling theory has shown that, with proper assumptions on the signal class, algorithms exist to reconstruct a signal in $\bR^d$ with an optimal number of samples. We generalize this problem to the case of spatial signals, where the sampling cost is a function of both the 
number of samples taken \textit{and} the distance traveled during estimation. This is motivated by our work studying regions of low oxygen concentration in the Great Lakes. 
    We show that for one-dimensional threshold classifiers, a tradeoff between the number of samples taken and distance traveled can be achieved using a generalization of binary search, which we refer to as \textit{quantile search}. 
We characterize both the estimation error after a fixed number of samples and the distance traveled in the noiseless case, as well as the estimation error in the case of noisy measurements.
We illustrate our results in both simulations and experiments and show that our method outperforms existing algorithms in the majority of practical scenarios.
\end{abstract}

\begin{IEEEkeywords}
    Active learning, adaptive sampling, autonomous systems, mobile sensors, path planning.
\end{IEEEkeywords}

\section{Introduction}
\label{sec:introduction}

Intelligently sampling signals of interest has been a fundamental topic in the signal processing community for many years, the most recent advances in this area being compressed sensing \cite{donoho2006compressed} and active learning \cite{settles2012active}. In these and other scenarios, the goal is typically to recover a signal from a given class (\eg bandlimited signals or the Bayes decision boundary for $0/1$ signals) using as few samples as possible. However, in the modeling of
spatial phenomena, such as oxygen concentration
in lakes, the sampling cost is a function of both the number of samples required \textit{and} the cost to travel to the sample locations. 
Therefore, the design of provably efficient algorithms to detect spatial phenomena is an important open problem and is the topic of this paper.

Consider our motivating problem, in which we wish to estimate the boundary of a hypoxic region (\ie a region of oxygen concentration below 2.0 ppm \cite{glerl2005hypoxia}) in the central basin of Lake Erie using an autonomous watercraft with a speed ranging from 0.5-4 m/s. Fig. 1 shows an interpolated estimate of the oxygen concentration based on a small number of samples taken throughout the lake, where the hypoxic region is in blue/purple. Oxygen concentration is a
strong indicator of the health of the Great Lakes \cite{glerl2005hypoxia} and the spatial extent of such regions is a topic of interest for researchers in the field \cite{beletsky2006modeling,scavia2014assessing}. 
We assume the hypoxic region is connected with a smooth boundary and
that the boundary remains relatively stationary over the course of a few days.
The problem of estimating the boundary can then be viewed as a binary classification problem, in which spatial points receive a label 0 if they are hypoxic and 1 otherwise, and the desired spatial extent corresponds to the Bayes decision boundary. While the application of optimal active learning algorithms such as \cite{singh2006active,castro2008minimax} minimizes the number of samples required to estimate the boundary, the distance traveled in the estimation procedure is prohibitive,
since these algorithms require a coarse sampling of the entire feature space (in this case, the entire central basin of Lake Erie).

\begin{figure}[t]
    \centering
    \includegraphics[width=2.5in]{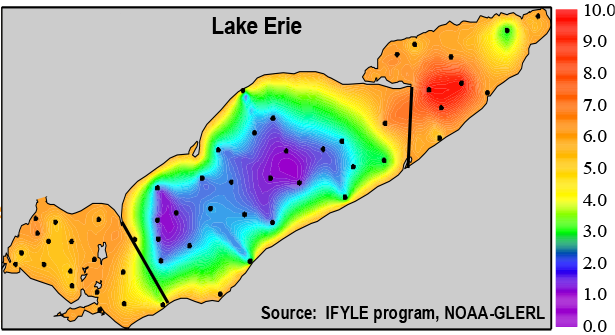}
    \caption{Dissolved oxygen concentrations in Lake Erie. Points represent sample locations and solid black lines delineate the central basin. \small{Source: http://www.glerl.noaa.gov/res/waterQuality/}}
    \label{fig:erie}
\end{figure}

\begin{figure*}[t]
    \centering
    \includegraphics[width=0.9\linewidth]{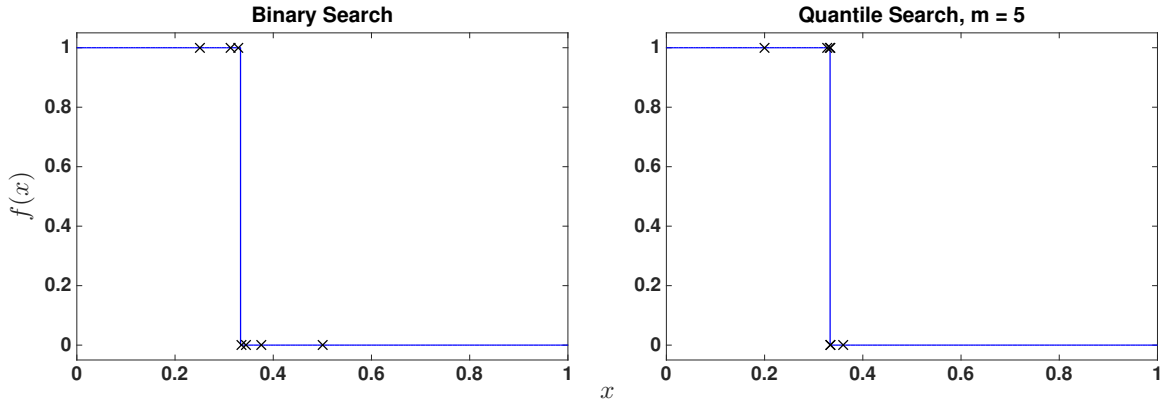}
    \caption{Example step function with $\theta = 1/3$ with corresponding measurements (marked by an x) taken using binary search (left) and quantile search with $m = 5$ (right).}
    \label{fig:step}
\end{figure*}

In this paper, we present an active learning algorithm called \textit{quantile search} that achieves a tradeoff between the number of measurements and distance traveled to estimate the change point of a one-dimensional step function. At its two extremes, quantile search minimizes either the number of samples or the distance traveled to estimate the decision boundary, with a tradeoff achieved by varying a search parameter. We derive the expected number of samples required and distance traveled
in the noiseless case and bound the number of samples required in the case of noisy measurements. We also show how a series of one-dimensional estimates can be used to estimate the two-dimensional boundary of interest. This paper is an extension of our previous work \cite{lipor2015quantile}. Our contributions beyond \cite{lipor2015quantile} are as follows. We provide detailed proofs of our theoretical results in the supplemental material \cite{lipor2016supplemental}.
We present a novel generalization in the case of noisy measurements that, unlike our previous work, is equivalent to the noiseless case when the probability of measurement error is zero. We also provide two algorithmic improvements for the problem of interest and show in simulations that these
greatly reduce the required sampling time. Our simulations are more realistic, including real bathymetry data from Lake Erie provided by the National Oceanic and Atmospheric Administration \cite{noaa2015bathymetry}.
We also compare the performance of our algorithm to a version of \textit{proactive learning} \cite{donmez2008proactive}.
Finally, we include results of our experiments performed on Third Sister Lake in Ann Arbor, MI with an autonomous watercraft controlled using a cloud-based architecture.

\section{Problem Formulation \& Related Work}
\label{sec:problem}

Determining the spatial extent of the hypoxic region shown in Fig. \ref{fig:erie} can be interpreted as learning a two-dimensional Bayes decision boundary. Following \cite{castro2008active}, we split our two-dimensional problem into several one-dimensional intervals, a process that is described further in Section \ref{sec:simulations}. In each interval, we must find a threshold beyond which the lake is hypoxic. Define the step function class 
\begin{equation*}
    \sF = \{f: [0,1] \rightarrow \bR | f(x) = \ind{[0,\theta)}(x)\}
\end{equation*}
where $\theta \in [0,1]$ is the change point and $\ind{S}(x)$ denotes the indicator function which is 1 on the set $S$ and 0 elsewhere. An example function belonging to $\sF$ with $\theta = 1/3$ is shown in Fig. \ref{fig:step}. In contrast to the standard active learning scenario, our goal is to estimate $\theta$ while minimizing the total time required for sampling, a function of both the number of samples taken \textit{and} the distance traveled. Denote the observations $\{Y_{n}\}_{n = 1}^{N} \in \set{0,1}^{N}$ as samples of an unknown function $f_{\theta} \in \sF$
taken at sample locations on the unit interval $\{X_{n}\}_{n = 1}^{N}$. With probability $p$, $0 \leq p < 1/2$, we observe an erroneous measurement. Thus
\begin{equation*}
    Y_{n} = \begin{cases}
        f_{\theta}(X_{n}) & \text{with probability $1 - p$} \\
        1 - f_{\theta}(X_{n}) & \text{with probability $p$}
    \end{cases} = f(X_{n}) \oplus U_{n},
\end{equation*}
where $\oplus$ denotes summation modulo 2, and $U_{n} \in \set{0,1}$ are Bernoulli random variables with parameter $p$. While other noise scenarios are common, here we assume the $U_{n}$ are independent and identically distributed and independent of $\set{X_{n}}$. This noise scenario is of interest as the motivating data (oxygen concentration) is a thresholded value in $\set{0,1}$, where Gaussian noise results in improper thresholding of the measurements. The extension to nonuniform noise (\eg a
Tsybakov-like noise condition as studied in \cite{castro2008minimax}) remains as a topic for future work.

\subsection{Related Work}

A number of active learning algorithms designed to estimate $\theta$ exist; however, these algorithms typically assume the sampling cost is due only to the measurements themselves. Most similar to our algorithm is the method of binary bisection and its extensions \cite{horstein1963sequential,burnashev1974interval,karp2007noisy,or2008bayesian,castro2008active,castro2008minimax,waeber2013bisection}. In the noiseless case, binary bisection estimates the change point of a step
function on the unit interval by successively halving the space of potential classifiers, termed the \textit{hypothesis space}. An example of this search procedure is shown in the left-hand plot of Fig. \ref{fig:step}. A noise-tolerant version of this algorithm was first presented in
\cite{horstein1963sequential}, where measurements are flipped with known probability $p$. A discretized version of this algorithm was analyzed in \cite{burnashev1974interval} and shown to be minimax optimal in \cite{castro2008minimax} under the Tsybakov noise condition. Further, the authors of \cite{castro2008minimax} use the discretized algorithm to show that a series of one-dimensional threshold estimates can be used to estimate functions belonging to the boundary fragment
class in $d$ dimensions at a minimax optimal rate. The original algorithm presented in \cite{horstein1963sequential} was recently shown to converge at a geometric rate in \cite{waeber2013bisection}. Binary bisection has also been used to obtain optimal rates in optimization \cite{ramdas2013algorithmic} and in the noisy 20 questions problem \cite{tsiligkaridis2016asynchronous}. In \cite{castro2008active}, the authors give a spatial sampling
problem as motivation for the probabilistic binary search (PBS) algorithm. However, a simple analysis shows that in the noiseless case, to estimate the threshold of a step function on the unit interval, binary search travels the entire unit interval. Hence, while the worst-case number of samples required is minimized, the total distance traveled is the worst possible. In the motivating problem given above, the central basin of Lake Erie has a width of roughly 80 km, making this approach
prohibitive.

More sophisticated active learning algorithms have been widely studied, achieving optimal rates for piecewise constant functions in \cite{castro2008minimax} and for the linear support vector machine in \cite{wang2016noise}. In both cases, the algorithm begins by uniformly sampling the entire feature space. Again considering the problem of interest, the central basin of Lake Erie has an area of approximately 14,000 km$^{2}$, making this approach infeasible. In contrast, the algorithm studied in \cite{castro2008minimax} was used in
\cite{singh2006active} to measure the hydrodynamics of Lake Wingra in Madison, WI, which has an area of 1.3 km$^{2}$.

Nonuniform sampling costs are studied in \cite{liu2009spatially,demir2012cost,guillory2009average,donmez2008proactive}. In \cite{liu2009spatially}, the authors use the uncertainty sampling heuristic to determine the most informative points and penalize for spatial costs using the traveling salesman problem with profits. The work of \cite{demir2012cost} uses both uncertainty and diversity to select points and also penalizes for arbitrary costs. In both cases, the algorithm proceeds
in batches, \ie by iteratively requesting a set of labels and retraining the classifier. This approach suffers the same pitfalls as \cite{singh2006active} in that the algorithm can require traversing the entire feature space multiple times. Further, neither algorithm is accompanied by theoretical guarantees. In \cite{guillory2009average}, the authors present and analyze a greedy algorithm for active learning with nonuniform costs. However, as in our case the cost associated with
each point is the distance from the previous point, the costs in question are both nonuniform and dynamic. A somewhat similar algorithm, known as \textit{proactive learning}, is presented in \cite{donmez2008proactive}, where the proposed strategy chooses at each round the point maximizing the difference between or ratio of informativeness and cost to label the point. In Section \ref{sec:simulations}, we compare with this algorithm using mutual information as our metric for informativeness.

The problem of sampling spatial phenomena using mobile robots has been studied in signal processing and robotics literature as well. In \cite{unnikrishnan2013sampling,unnikrishnan2013high}, the authors study the case where the sampling cost is near-zero and show that equispaced parallel lines result in the minimum distance required to reconstruct a variety of practical signals. Mutual information is also used as a metric for informativeness
in \cite{singh2009efficient}, where the authors impose a Gaussian process model to perform path planning for robots used to track a variety of spatial phenomena. A greedy algorithm is presented with theoretical guarantees based on submodularity \cite{krause2008near}. However, the model imposed is not appropriate for determining the boundary of a region of interest. The recent work of \cite{bayram2016gathering} considers the measurement
cost and travel time to estimate the location of point targets using mobile robots but does not easily extend to the case of estimating the boundary of a region of interest. The algorithm in \cite{rahimi2005adaptive} is similar to the one described in
\cite{donmez2008proactive}, with the main difference being that in early stages the algorithm emphasizes regularity of samples (\ie encourages early samples to be taken uniformly throughout the feature space). 

\section{Quantile Search}
\label{sec:QS}


In this section, we present our algorithm \textit{quantile search}, an extension of binary search and ideas in \cite{burnashev1974interval,castro2008active} to penalize both the sample complexity and distance traveled during the estimation procedure.
The basic idea behind this algorithm is as follows. We wish to find a tradeoff between the number of samples required and the total distance traveled to achieve a given
estimation error for the change point of a step function on the unit interval. As we know, binary bisection minimizes the number of required samples. On the other hand, continuous spatial sampling
minimizes the required distance to estimate the threshold. Binary search bisects the feasible interval (hypothesis space) at each step. In contrast, one can think of continuous sampling as dividing the feasible interval into infinitesimal subintervals at each step. With this in mind, a tradeoff becomes clear: one can divide the feasible interval into subintervals of size $1/m$, where $m$ is a real number between 2 and $\infty$. Intuition would tell us that increasing $m$ would increase the number of
samples required but decrease the distance traveled in sampling. In what follows, we show that this intuition is correct in both the noise-free and noisy cases, resulting in two novel search algorithms.

\subsection{Deterministic Quantile Search}
\label{sec:DQS}

We first describe and analyze quantile search in the noise-free case ($p = 0$), here referred to as deterministic quantile search (DQS). To estimate the change point of a step function, deterministic binary bisection travels either forward or backward (depending on the measurement) a fraction 1/2 into the feasible interval. In contrast, the DQS algorithm presented here travels $1/m$ forward or backward, where
$m \in [2,\infty)$. While the DQS measurements for $m > 2$ are less informative than in binary bisection, we expect that the distance traveled during the estimation procedure will be reduced, since we can pass the change point by a fraction at most $1/m$. The search procedure for the case of $m = 5$ is shown in the right-hand plot of Fig. \ref{fig:step}. Note that in contrast to binary search, quantile search does not overshoot the change point $\theta = 1/3$ by a significant amount. A formal description of the procedure is given in Algorithm \ref{alg:DQS}. In the following subsections, we analyze the expected sample complexity and distance traveled for the algorithm and show the required
number of samples increases monotonically with $m$, and the distance traveled decreases monotonically with $m$, indicating that the desired tradeoff is achieved.

\begin{algorithm}[t]
    \caption{Deterministic Quantile Search (DQS)}
    \label{alg:DQS}
    \begin{algorithmic}[1]
        \STATE \textbf{Input:} search parameter $m$, stopping error $\varepsilon$
        \STATE \textbf{Initialize:} $X_{0} \gets 0$, $Y_{0} \gets 1$, $n \gets 1$, $a \gets 0$, $b \gets 1$
        \WHILE{$b-a > 2\varepsilon$}
        \IF{$Y_{n-1} = 1$}
        \STATE $X_{n} \gets X_{n-1} + \frac{1}{m}(b-a)$
        \ELSE
        \STATE $X_{n} \gets X_{n-1} - \frac{1}{m}(b-a)$
        \ENDIF
        \STATE $Y_{n} \gets f(X_{n})$
        \STATE $a = \max \set{X_{i}: Y_{i} = 1, i \leq n}$
        \STATE $b = \min \set{X_{i}: Y_{i} = 0, i \leq n}$
        \STATE $\hat{\theta}_{n} \gets \frac{a + b}{2}$
        \ENDWHILE
    \end{algorithmic}
\end{algorithm}

\subsubsection{Convergence of Estimation Error}

We analyze the expected error after a fixed number of samples for the DQS algorithm. The main result and a sketch of the proof are provided here. An expanded proof can be found in the supplemental material \cite{lipor2016supplemental}.

\begin{thm}
    \label{thm:dqs_rate}
    Consider a deterministic quantile search with parameter $m$ and let $\rho = \frac{m-1}{m}$. Begin with a uniform prior on $\theta$. The expected estimation error after $n$ measurements is then
    \begin{equation}
        \label{eq:dqs_err}
        \bE [|\tnhat - \theta|] = \frac{1}{4} \left[\rho^{2} + (1-\rho)^{2}\right]^{n}.
    \end{equation}
\end{thm}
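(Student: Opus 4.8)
The plan is to track the length $L_n := b_n - a_n$ of the feasible interval maintained by the algorithm and to show that, under the uniform prior, the posterior for $\theta$ stays uniform on $[a_n, b_n]$. First I would establish a structural invariant by induction on $n$: the current sample location $X_n$ always coincides with one endpoint of the feasible interval, and $\theta \in [a_n, b_n]$ throughout. Concretely, when $Y_{n-1} = 1$ we have $X_{n-1} < \theta$, which forces $X_{n-1} = a_{n-1}$, and the forward step places $X_n = a_{n-1} + \frac{1}{m}(b_{n-1} - a_{n-1})$ strictly inside the interval; the symmetric statement holds when $Y_{n-1} = 0$. This sample point splits $[a_{n-1}, b_{n-1}]$ into two pieces of relative lengths $\frac{1}{m}$ and $\rho = \frac{m-1}{m}$, and the update rules for $a$ and $b$ in Algorithm \ref{alg:DQS} retain exactly the piece containing $\theta$.

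Next I would analyze the random evolution of $L_n$. Because the prior on $\theta$ is uniform and every observation only certifies that $\theta$ lies in a subinterval, a Bayes update shows the posterior for $\theta$ remains uniform on $[a_n, b_n]$ at every step. Consequently, conditioned on the interval at step $n-1$, the new sample is positioned so that $\theta$ lands in the longer piece (length $\rho L_{n-1}$) with probability $\rho$ and in the shorter piece (length $(1-\rho) L_{n-1}$) with probability $1-\rho$, independently of where the interval sits. Hence $L_n = L_{n-1} R_n$, where $R_n = \rho$ with probability $\rho$ and $R_n = 1-\rho$ with probability $1-\rho$, and the $R_n$ are i.i.d. Taking expectations and using $L_0 = 1$ gives $\bE[L_n] = \left(\bE[R_1]\right)^n = \left[\rho^2 + (1-\rho)^2\right]^n$.

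Finally I would convert the interval length into the stated error. Since $\tnhat = (a_n + b_n)/2$ is the midpoint and $\theta$ is conditionally uniform on $[a_n, b_n]$, the centered error $\theta - \tnhat$ is uniform on $[-L_n/2,\, L_n/2]$, so $\bE\!\left[\,|\tnhat - \theta|\,\middle|\, L_n\right] = L_n/4$. Combining with the previous display via the tower property yields $\bE[|\tnhat - \theta|] = \frac{1}{4}\bE[L_n] = \frac{1}{4}\left[\rho^2 + (1-\rho)^2\right]^n$, as claimed.

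The step I expect to be the main obstacle is the second one: rigorously justifying that the posterior on $\theta$ stays uniform and, in particular, that the multiplicative factors $R_n$ are genuinely i.i.d. rather than merely identically distributed. The subtlety is that the split ratios $\frac{1}{m}$ and $\rho$ are deterministic and do not depend on the current interval, so the only randomness at each step is which side of the new sample $\theta$ falls on; making precise that this side-selection is conditionally independent of the history given the uniform posterior — so that the expectation of the product $\prod_n R_n$ factorizes — is where the care is needed. The structural invariant and the closing uniform-error computation are then routine.
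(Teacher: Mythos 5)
Your proposal is correct and is essentially the paper's argument in a cleaner packaging: the paper conditions on which of the $2^n$ feasible subintervals contains $\theta$, so each interval of length $(1-\rho)^i\rho^{\,j}$ contributes $\tfrac14(1-\rho)^{2i}\rho^{2j}$ and the sum collapses to $\tfrac14[\rho^2+(1-\rho)^2]^n$ by induction, which is exactly your factorization $\bE[L_n]=\bE[R_1]^n$ expanded term by term. Your identification of the conditional i.i.d. structure of the ratios $R_n$ (together with the conditional uniformity of the posterior on $[a_n,b_n]$) is precisely the content of the paper's ``parent/child'' induction step, so there is no gap.
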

\begin{proof}
    (Sketch; see complete proof in \cite{lipor2016supplemental}) The proof proceeds from the law of total expectation. Let $Z_{n} = |\tnhat - \theta|$. The first measurement is taken at $1/m$, and hence the expected error can be calculated when $\theta \leq 1/m$ and $\theta > 1/m$. 
    \begin{eqnarray*}
        \bE[Z_{1}] &=& \bE\left[Z_{1} | \theta \leq \frac{1}{m}\right] \bP\left(\theta \leq \frac{1}{m}\right) + \\
        && \;\; \bE\left[Z_{1} | \theta > \frac{1}{m}\right] \bP\left(\theta > \frac{1}{m}\right) \\
        &=& \frac{1}{4} \left[ (1-\rho)^{2} + \rho^{2} \right].
    \end{eqnarray*}
    Similarly, after the second measurement is taken, there are four intervals, two which partition the interval $[0,1/m]$, and two which partition $(1/m,1]$. These result in four monomials of degree 4, one of which is $(1-\rho)^{4}$, one which is $\rho^{4}$, and two which are $(1-\rho)^{2}\rho^{2}$. The basic idea is that each ``parent'' interval integrates to $(1-\rho)^{i}\rho^{j}$ and in the next step gives birth to two ``child'' intervals, one evaluating to $(1-\rho)^{i+1}\rho^{j}$ and the other
    $(1-\rho)^{i}\rho^{j+1}$. The proof of the theorem then follows by induction.
\end{proof}

Consider the above result when $m = 2$. In this case, the error becomes $\bE[|\tnhat - \theta|] = 2^{-(n+2)}$. Comparing to the worst case, we see that the average case sample complexity is exactly one sample better than the worst case, matching the well-known theory of binary search. In Section~\ref{sec:simulations} we confirm this result through simulation.

\subsubsection{Distance Traveled}

Next, we analyze the expected distance traveled by the DQS algorithm in order to converge to the true $\theta$. The proof is similar to that of the previous theorem in that it follows by the law of total expectation. After each sample, we analyze the expected distance given that the true $\theta$ lies in a given interval. The result and a proof sketch are given below, with the full proof included in the supplemental material \cite{lipor2016supplemental}.

\begin{thm}
    \label{thm:dqs_dist}
    Let $D$ denote a random variable representing the distance traveled during a deterministic quantile search with parameter $m$. Begin with a uniform prior on $\theta$. Then
    \begin{equation}
        \label{eq:dqs_dist}
        \bE[D] = \frac{m}{2m-2}.
    \end{equation}
\end{thm}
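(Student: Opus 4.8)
The plan is to reduce the expected distance to a geometric series in the expected width of the feasible interval. Write $w_n = b-a$ for the width of the interval $[a,b]$ maintained in Algorithm~\ref{alg:DQS} after the $n$-th measurement, with $w_0 = 1$. By the update rule, the $n$-th step moves the sampler by exactly $|X_n - X_{n-1}| = \frac{1}{m} w_{n-1}$, so the total path length (letting $\varepsilon \to 0$) is $D = \sum_{n \geq 1} \frac{1}{m} w_{n-1}$, and therefore $\bE[D] = \frac{1}{m} \sum_{n \geq 0} \bE[w_n]$. The whole problem thus reduces to computing $\bE[w_n]$.

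Next I would establish the multiplicative recursion for the width. Each measurement splits the current interval of width $w$ at the sample point $X_n$, which sits a fraction $1/m = 1-\rho$ of the way in from whichever endpoint the sampler currently occupies; this produces one child of width $(1-\rho)w$ and one of width $\rho w$, and the surviving interval is the one containing $\theta$. The key bookkeeping step is to track the sampler's position: if $Y_{n-1} = 1$ it sits at $a$ and moves forward, while if $Y_{n-1} = 0$ it sits at $b$ and moves backward. Carrying out both cases under the posterior on $\theta$ — which remains uniform on $[a,b]$ throughout — one checks that in either case $\theta$ falls in the width-$\rho w$ child with probability exactly $\rho$ and in the width-$(1-\rho)w$ child with probability $1-\rho$. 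Hence $w_n = R_n\, w_{n-1}$ with $R_n = \rho$ w.p. $\rho$ and $R_n = 1-\rho$ w.p. $1-\rho$, so that $\bE[R_n] = \rho^2 + (1-\rho)^2$.

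Because the posterior on $\theta$ is uniform on the current feasible interval at every stage, the conditional law of $R_n$ given the past is always this same two-point distribution; the multipliers are therefore i.i.d., and $\bE[w_n] = \bE[R_1 \cdots R_n] = (\rho^2 + (1-\rho)^2)^n$. (Equivalently, this follows from Theorem~\ref{thm:dqs_rate}, since $\tnhat$ is the midpoint of $[a,b]$ and $\theta$ is uniform there, giving $\bE[Z_n] = \frac{1}{4}\bE[w_n]$.) Summing the geometric series — which converges since $\rho^2 + (1-\rho)^2 = 1 - 2\rho(1-\rho) < 1$ for $m \in [2,\infty)$ — and using $\rho = \frac{m-1}{m}$, so that $1 - (\rho^2 + (1-\rho)^2) = \frac{2m-2}{m^2}$, yields
\begin{equation*}
    \bE[D] = \frac{1}{m} \cdot \frac{1}{1 - (\rho^2 + (1-\rho)^2)} = \frac{1}{m} \cdot \frac{m^2}{2m-2} = \frac{m}{2m-2}.
\end{equation*}

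The main obstacle is the middle step: verifying that the width multiplier has the same two-point distribution regardless of the travel direction, and hence that the $R_n$ are genuinely i.i.d.\ rather than merely identically distributed. This hinges on the fact that the uniform posterior is preserved under the conditioning induced by each measurement, so that the direction of travel (encoded by $Y_{n-1}$) and the accumulated width $w_{n-1}$ carry no information about $R_n$. Everything else — the per-step distance identity and the geometric sum — is routine once this invariant is in place.
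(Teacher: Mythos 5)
Your proof is correct, and it takes a genuinely different route from the paper's. The paper decomposes $D$ into ``legs'' between direction reversals: $D_n$ is the distance traveled to cross $\theta$ for the $n$-th time, and each $\bE[D_n] = \frac{m}{(2m-1)^n}$ is computed via an explicit partition $\{A_i\}$ of the unit interval into the reachable sample locations, followed by an induction over legs and a geometric sum. You instead decompose $D$ into per-step increments, observe that each step has deterministic length $\frac{1}{m}w_{n-1}$ given the current feasible width, and reduce the whole computation to $\bE[w_n]$ via the i.i.d.\ multiplier recursion $w_n = R_n w_{n-1}$ with $\bE[R_n] = \rho^2 + (1-\rho)^2$. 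Your key invariant --- that the posterior on $\theta$ is uniform on the current feasible interval, so the width multiplier's two-point law is independent of the travel direction and of the past --- is exactly right, and it is the same invariant that drives the paper's Theorem~\ref{thm:dqs_rate}; indeed your parenthetical remark that $\bE[Z_n] = \frac{1}{4}\bE[w_n]$ lets you import \eqref{eq:dqs_err} wholesale and makes the distance result an almost immediate corollary of the error result. The algebra checks out: $1 - (\rho^2+(1-\rho)^2) = 2\rho(1-\rho) = \frac{2m-2}{m^2}$, giving $\frac{1}{m}\cdot\frac{m^2}{2m-2} = \frac{m}{2m-2}$. Your argument is shorter and more unified; what the paper's leg-by-leg decomposition buys in exchange is the finer-grained intermediate quantity $\bE[D_n] = \frac{m}{(2m-1)^n}$, which quantifies how far the sampler overshoots $\theta$ on each successive pass and is of some independent interest for understanding the oscillatory behavior of the search. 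The only point worth making explicit in a polished write-up is the interchange of expectation and infinite sum, which is immediate from Tonelli's theorem since the summands are nonnegative (or from the a.s.\ bound $w_n \leq \rho^n$).
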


\begin{proof}
    (Sketch, see full proof in \cite{lipor2016supplemental}) We first consider the expected distance traveled before the algorithm reaches a point $x_{1} > \theta$. Let $D_{1}$ be a random variable denoting this distance. Once the algorithm passes this point, it moves in the reverse direction until reaching $x_{2} < \theta$, moving a distance $D_{2}$. This process repeats until convergence. Let $D_{n}$ be a random variable denoting the distance required to move to the right of $\theta$ for the $\left\lceil \frac{n}{2} \right\rceil$th time when $n$ is odd, and to the left of $\theta$ for the $\frac{n}{2}$th time when $n$ is even. In this case, we have that
    \begin{equation}
    \label{eq:dqs_tot_dist1}
        \bE[D] = \sum_{n = 1}^{\infty} \bE[D_{n}].
    \end{equation}
    \sloppypar{First, we would like to find $\bE[D_{1}]$. Let $A_{i}$ denote the interval $\left[\frac{1}{m}\sum_{p = 0}^{i-1}\left( \frac{m-1}{m} \right)^{p}, \frac{1}{m}\sum_{p = 0}^{i}\left( \frac{m-1}{m} \right)^{p}\right)$, where $A_{0} = \left[0,\frac{1}{m}\right)$, so that the $A_{i}$'s form a partition of the unit interval whose endpoints are possible values of the sample locations $X_{j}$.} Now note that
    \begin{equation*}
        \bE[D_{1}] = \sum_{i = 0}^{\infty} \bE[D_{1} | \theta \in A_{i}] \bP(\theta \in A_{i}).
        \label{eq:LTE}
    \end{equation*}
    Then since we assume $\theta$ is distributed uniformly over the unit interval, 
    \begin{eqnarray*}
        \bP(\theta \in A_{i}) &=&  \frac{1}{m}\sum_{p = 0}^{i}\left( \frac{m-1}{m} \right)^{p} - \frac{1}{m}\sum_{p = 0}^{i-1}\left( \frac{m-1}{m} \right)^{p} \\
        &=& \frac{1}{m}\left(\frac{m-1}{m}\right)^{i}.
        \label{eq:partition}
    \end{eqnarray*}
    Next, note that 
    \begin{eqnarray*}
        \bE[D_{1}|\theta \in A_{i}] &=&  \frac{1}{m}\sum_{p = 0}^{i} \left(\frac{m-1}{m}\right)^{p} \\
        &=& 1 - \left( \frac{m-1}{m} \right)^{i+1}.
        \label{eq:cond expec}
    \end{eqnarray*}
    Thus we have
    \begin{eqnarray*}
        \bE[D_{1}] &=& \sum_{i = 0}^{\infty} \bE[D_{1}|\theta \in A_{i}] \bP(\theta \in A_{i}) \\
        &=& \sum_{i = 0}^{\infty} \left[ 1 - \left(\frac{m-1}{m}\right)^{i+1} \right] \left[ \frac{1}{m} \left(\frac{m-1}{m}\right)^{i} \right] \\
        &=& \frac{m}{2m-1}.
    \end{eqnarray*}
    The proof proceeds by rewriting the above in terms of $\rho = (m-1)/m$ and then calculating $\bE[D_{n}]$.  This is done by dividing each $A_{i}$ into subintervals which form partitions of $A_{i}$. By induction we get 
    \begin{equation}
    \label{eq:EDn}
    \bE[D_{n}] = \frac{m}{(2m-1)^{n}} \;,
    \end{equation} and the result then follows from the infinite sum of \eqref{eq:dqs_tot_dist1}.
\end{proof}

\subsubsection{Sampling Time}

Using the above results, we wish to find the optimal tradeoff for a given set of sampling parameters. Let $\gamma$ be the time required to take one sample and $\eta$ be the time required to travel one unit of distance. The total sampling time $T$ is then
\begin{equation}
    \label{eq:time}
    T = \gamma N + \eta D,
\end{equation}
where $N$ denotes the number of samples required. Given a fixed sampling time and desired error, \eqref{eq:time} can be used to estimate the sample budget $N$. However, this approach differs from our goal of minimizing the total sampling time. Alternatively, the average value of $N$ can be estimated numerically and used to optimize the expected value of $T$.
We show examples of this approach in Section~\ref{sec:simulations} for both the deterministic and probabilistic versions of quantile search.

As a final note, one may wonder about the relation to what is known as m-ary search \cite{schlegel2009kary}. In contrast to quantile search, m-ary search is tree-based. To make the difference clear, consider the an example with $\theta = 3/8$ and let $m = 4$. In this case, both algorithms take their first sample at $X = 1/4$. However, after measuring $Y = 1$, quantile search takes its second measurement at $X = 7/16$, while m-ary search proceeds to $X = 1/2$. One may then expect that both
algorithms would achieve the desired tradeoff, with m-ary search using fewer samples and more distance for the same value of $m$. 
We focus on quantile search for two reasons. First, quantile search does not require $m$ to be an integer and therefore gives more flexibility in the resulting tradeoff. Second, quantile search as described is the natural generalization of PBS and lends itself to the analysis of \cite{burnashev1974interval,castro2008active} in the case where the measurements are
noisy. A comparison to noisy m-ary search is a topic for future work.

\subsection{Probabilistic Quantile Search}
\label{sec:PQS}

In this section, we extend the idea behind Section~\ref{sec:DQS} to the case where measurements may be noisy (\ie $p \geq 0$). In \cite{burnashev1974interval}, the authors present an algorithm referred to in the literature as \textit{probabilistic binary search} (PBS). The basic idea behind this algorithm is to perform Bayesian updating in order to maintain a posterior distribution on $\theta$ given the measurements and locations. Rather than bisecting the interval at each step, the algorithm bisects the posterior distribution. This process is then iterated until convergence and has been shown to
achieve optimal sample complexity throughout the literature \cite{castro2008minimax,or2008bayesian}. We now extend this idea using the quantile methodology of the previous section, resulting in what we term \textit{probabilistic quantile search} (PQS).

The idea behind PQS is straightforward. Starting with a uniform prior, the first sample is taken at $X_{1} = 1/m$. The posterior density $\pi_{n}(x)$ is then updated as described below, and $\tnhat$ is chosen as the median of this distribution. The algorithm proceeds by taking samples $X_{n}$ such that
\begin{equation*}
    \int_{0}^{X_{n+1}} \pi_{n}(x) dx = \frac{1}{m}.
\end{equation*}
For $m = 2$, the above denotes the median of the posterior distribution and reduces to PBS, while in general this denotes sampling at the $m$-quantile of the posterior. A formal description is given in Algorithm~\ref{alg:PQS}. 

\begin{algorithm}[t]
    \caption{Probabilistic Quantile Search (PQS)}
    \label{alg:PQS}
    \begin{algorithmic}[1]
        \STATE \textbf{Input:} search parameter $m$, probability of error $p$
        \STATE \textbf{Initialize:} $\pi_{0}(x) = 1$ for all $x \in [0,1]$, $n \gets 0$
        \WHILE{not converged}
        \STATE choose $X_{n+1}$ such that $\int_{0}^{X_{n+1}} \pi_{n}(x) dx = \frac{1}{m}$
        \STATE $Y_{n+1} \gets f(X_{n+1}) \oplus U_{n+1}$, where $U_{n+1} \sim$ Ber($p$)
        \IF{$Y_{n+1} = 0$}
        \STATE \begin{equation*}
            \pi_{n+1}(x) = \begin{cases}
                (1-p)\left(\frac{m}{1 + (m-2)p}\right) \pi_{n}(x), & x \leq X_{n+1} \\
                p\left(\frac{m}{1 + (m-2)p}\right) \pi_{n}(x), & x > X_{n+1}
                \end{cases}
            \end{equation*}
        \ELSE
        \STATE \begin{equation*}
            \pi_{n+1}(x) = \begin{cases}
                p\left(\frac{m}{1 + (m-2)p}\right) \pi_{n}(x), & x \leq X_{n+1} \\
                (1-p)\left(\frac{m}{1 + (m-2)p}\right) \pi_{n}(x), & x > X_{n+1}
                \end{cases}
            \end{equation*}
        \ENDIF
        \STATE $n \gets n + 1$
        \ENDWHILE
        \STATE estimate $\tnhat$ such that $\int_{0}^{\tnhat} \pi_{n}(x) = 1/2$
    \end{algorithmic}
\end{algorithm}

We derived the update for PQS in our previous work \cite{lipor2015quantile}, and it can be seen in steps 7 and 9 of Algorithm \ref{alg:PQS}. Here we derive a more general version of the update that will be referred to in Section~\ref{sec:proactive}. 
Begin with the first sample. We have $\pi_0(x) = 1$ for all $x$ and wish to find $\pi_1(x)$. Let $f_1(x|X_1, Y_1)$ be the conditional density of $\theta$ given $X_1, Y_1$. Applying Bayes rule, the posterior becomes:
\begin{equation*}
    f_1(x | X_1, Y_1)  = \frac{\bP(X_1, Y_1 | \theta = x) \pi_0(x)}{\bP(X_1, Y_1)}
\end{equation*}
For illustration, consider the case where $\theta = 0$. We now take the first measurement at $X_{1} = \phi$ (note $\phi = 1/m$ for PQS). Then
\begin{equation*}
    \bP\left(X_{1} = \phi, Y_{1} = 0 | \theta = 0\right) = 1 - p
\end{equation*}
and
\begin{equation*}
    \bP\left(X_{1} = \phi, Y_{1} = 1 | \theta = 0\right) = p.
\end{equation*}
In fact, this holds for any $\theta < \phi$.
Now examine the denominator:
\begin{eqnarray*}
    \bP(X_{1} = \phi, Y_{1} = 0) &=& \phi(1-p) + (1-\phi)p \\
    &:=& \phi * p,
\end{eqnarray*}
We then update the posterior distribution to be
\begin{equation*}
    \pi_{1}(x) = \begin{cases}
        \frac{(1-p)}{\phi * p} & x \leq \phi \\
        \frac{p}{\phi * p} & x > \phi.
    \end{cases}
\end{equation*}
The equivalent posterior density can be found for when $Y_1=1$.
The process of making an observation and updating the prior is then repeated, yielding general formula for the posterior update. When $Y_{n+1} = 0$, we have
\begin{equation*}
    \pi_{n+1}(x) = \begin{cases}
        \frac{(1-p)}{\phi * p} \pi_{n}(x) & x \leq X_{n+1} \\
        \frac{p}{\phi * p} \pi_{n}(x) & x > X_{n+1}.
    \end{cases}
\end{equation*}
Similarly, for $Y_{n+1} = 1$, we have
\begin{equation*}
    \pi_{n+1}(x) = \begin{cases}
        \frac{p}{\phi * p} \pi_{n}(x) & x \leq X_{n+1} \\
        \frac{(1-p)}{\phi * p} \pi_{n}(x) & x > X_{n+1}.
    \end{cases}
\end{equation*}

\subsubsection{Convergence of Estimation Error}

Analysis of the above algorithm has proven difficult since its inception in 1974, with a first proof of a geometric rate of convergence appearing only recently in \cite{waeber2013bisection}. Instead, the authors and those following use a discretized version involving minor modifications. We follow this strategy, with the discretized algorithm given in the supplemental material \cite{lipor2016supplemental}. In this case, the unit interval is divided into bins of size $\Delta$, such that $\Delta^{-1} \in \bN$. The posterior distribution is
parameterized, and a parameter $\alpha$ is used instead of $p$ in the Bayesian update, where $0 < p < \alpha$.
The analysis of rate of
convergence then centers around the increasing probability that at least half of the mass of $\pi_{n}(x)$ lies in the correct bin. A formal description of the algorithm can be found in the supplemental material \cite{lipor2016supplemental}. Given this discretized version of PQS, we arrive at the following result.

\begin{thm}
    \label{thm:pqs_rate}
    Under the assumptions given in Section~\ref{sec:problem}, the discretized PQS algorithm satisfies
    \begin{equation}
        \sup_{\theta \in [0,1]} \bE[|\tnhat - \theta|] \leq 2 \left( \frac{m-1}{m} + \frac{2\sqrt{p(1-p)}}{m} \right)^{n/2}.
        \label{eq:thm_exp}
    \end{equation}
\end{thm}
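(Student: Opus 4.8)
The plan is to reduce the error bound to a statement about the posterior mass in the bin containing $\theta$, and then to control that mass with a Bhattacharyya-type potential that contracts geometrically. Write $K = \Delta^{-1}$ for the number of bins and let $j^{*}$ denote the bin containing the true change point. Since $\tnhat$ is taken to be the median of $\pi_{n}$, whenever the correct bin carries at least half the posterior mass, $\pi_{n}(j^{*}) \geq 1/2$, the median lies in that bin and the error is at most one bin width, $|\tnhat - \theta| \leq \Delta$; otherwise the error is at most $1$. Splitting on this event gives
\begin{equation*}
    \bE[|\tnhat - \theta|] \leq \Delta + \bP\left(\pi_{n}(j^{*}) < \tfrac{1}{2}\right),
\end{equation*}
so it suffices to bound the probability that the correct bin is in the minority.

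To bound that probability I would introduce the potential $\Phi_{n} = \sum_{j \neq j^{*}} \sqrt{\pi_{n}(j)/\pi_{n}(j^{*})}$. A short computation shows $\{\pi_{n}(j^{*}) < 1/2\} \subseteq \{\Phi_{n} \geq 1\}$, so Markov's inequality yields $\bP(\pi_{n}(j^{*}) < 1/2) \leq \bE[\Phi_{n}]$. The crucial feature of this ratio potential is that the random, data-dependent normalizing constant of the Bayesian update cancels: writing the update factor applied to bin $j$ as $\ell_{j}(Y) \in \{\alpha, 1-\alpha\}$, one has $\pi_{n+1}(j)/\pi_{n+1}(j^{*}) = (\ell_{j}(Y)/\ell_{j^{*}}(Y))\,(\pi_{n}(j)/\pi_{n}(j^{*}))$, independent of the normalizer.

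The heart of the argument is a one-step contraction $\bE[\Phi_{n+1} \mid \sF_{n}] \leq C\,\Phi_{n}$ with $C = \frac{m-1}{m} + \frac{2\sqrt{p(1-p)}}{m}$. Taking the expectation over $Y_{n+1}$, whose law is governed by the true noise level $p$, every bin $j$ on the same side of the sample $X_{n+1}$ as $j^{*}$ contributes a factor $\bE_{Y}[\sqrt{\ell_{j}/\ell_{j^{*}}}] = 1$, while every bin on the opposite side contributes the Bhattacharyya coefficient $\bE_{Y}[\sqrt{\ell_{j}/\ell_{j^{*}}}] = 2\sqrt{p(1-p)}$ (setting $\alpha = p$). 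Hence $\bE[\Phi_{n+1}\mid \sF_{n}] = \Phi_{n}^{S} + 2\sqrt{p(1-p)}\,\Phi_{n}^{O}$, where $\Phi_{n}^{S}$ and $\Phi_{n}^{O}$ collect the same- and opposite-side contributions. Since $\frac{m-1}{m}$ and $\frac1m$ are exactly the two masses into which the $1/m$-quantile rule splits the posterior, the desired inequality reduces, after factoring out $(1 - 2\sqrt{p(1-p)})$, to the purely geometric bound $\Phi_{n}^{S}/\Phi_{n}^{O} \leq m-1$, which holds with equality for the uniform prior. I expect this step to be the main obstacle: one must control the non-contracting same-side contribution through the quantile split for the constrained staircase posteriors the algorithm actually produces, and use the slack $\alpha > p$ to keep $\Phi_{n}$ a genuine supermartingale under the true noise. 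The boundary bin straddling $X_{n+1}$ and the symmetric case $j^{*} \in L$ require careful but routine bookkeeping.

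Granting the contraction, iterating gives $\bE[\Phi_{n}] \leq C^{n}\Phi_{0}$, and since the prior is uniform over $K$ bins, $\Phi_{0} = K-1 < \Delta^{-1}$. Combining the three steps yields $\bE[|\tnhat - \theta|] \leq \Delta + C^{n}\Delta^{-1}$. Finally I would optimize the free discretization parameter: balancing the two terms by AM--GM at $\Delta \approx C^{n/2}$ (rounded to an admissible value with $\Delta^{-1} \in \bN$ and the excess absorbed into the constant) produces $\bE[|\tnhat - \theta|] \leq 2\,C^{n/2}$ uniformly in $\theta$, which is precisely \eqref{eq:thm_exp}.
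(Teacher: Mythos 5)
Your overall skeleton is the right one and matches the standard Burnashev--Zigangirov-style analysis that the paper's supplemental proof follows: reduce $\bE[|\tnhat-\theta|]$ to $\Delta + \bP(\text{correct bin holds}<1/2\text{ of the mass})$, bound that probability by a potential whose one-step conditional expectation contracts by $C=\frac{m-1}{m}+\frac{2\sqrt{p(1-p)}}{m}$, note the potential starts at roughly $\Delta^{-1}$, and balance $\Delta + C^{n}\Delta^{-1}$ at $\Delta\approx C^{n/2}$. The gap is in the potential you chose. The one-step contraction $\bE[\Phi_{n+1}\mid\sF_n]\le C\,\Phi_n$ for $\Phi_n=\sum_{j\ne j^*}\sqrt{\pi_n(j)/\pi_n(j^*)}$ reduces, as you correctly computed, to $\Phi_n^{S}\le(m-1)\Phi_n^{O}$, and this is false for reachable posteriors: take $m=2$, let the half of the mass on the same side as $j^*$ be spread over $L$ equal bins of mass $\frac{1}{2L}$ (with $j^*$ one of them) and the other half sit in a single bin. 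Then $\Phi_n^{S}=L-1$ while $\Phi_n^{O}=\sqrt{L}$, and $\bE[\Phi_{n+1}\mid\sF_n]=L-1+2\sqrt{p(1-p)}\sqrt{L}>C(L-1+\sqrt{L})$ for large $L$. The failure is structural: because $\sqrt{\cdot}$ is concave, $\Phi$ weights a side by how finely its mass is subdivided, not by its total mass, so the $1/m$ versus $(m-1)/m$ quantile split gives you no control over $\Phi^{S}/\Phi^{O}$. This cannot be patched by the slack $\alpha>p$; it is the step that sinks the argument.

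The fix is to drop the square roots from the potential and put them into the choice of $\alpha$. Track the linear odds ratio $M_n=\frac{1-\pi_n(j^*)}{\pi_n(j^*)}=\sum_{j\ne j^*}\pi_n(j)/\pi_n(j^*)$, for which the same Markov step works ($\bP(\pi_n(j^*)<1/2)=\bP(M_n>1)\le\bE[M_n]$ and $M_0=K-1\le\Delta^{-1}$). Writing $\tau$ for the posterior mass on the same side of $X_{n+1}$ as $j^*$, a direct computation with the normalized update gives
\begin{equation*}
\bE[M_{n+1}\mid\sF_n]=\frac{\tau-\pi_n(j^*)}{\pi_n(j^*)}+q\,\frac{1-\tau}{\pi_n(j^*)},\qquad q=(1-p)\frac{\alpha}{1-\alpha}+p\,\frac{1-\alpha}{\alpha},
\end{equation*}
i.e.\ the same-side mass contributes factor $1$ and the opposite-side mass contributes factor $q$ --- now weighted by mass, so the quantile split $\tau\in\{1/m,\,(m-1)/m\}$ immediately yields $\bE[M_{n+1}\mid\sF_n]\le\bigl(\frac{m-1}{m}+\frac{q}{m}\bigr)M_n$ (the heavy-side case $\tau=\frac{m-1}{m}$ is the binding one). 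Minimizing $q$ over $\alpha$ gives $\frac{\alpha}{1-\alpha}=\sqrt{p/(1-p)}$, i.e.\ $\alpha=\sqrt{p}/(\sqrt{p}+\sqrt{1-p})>p$, and $q=2\sqrt{p(1-p)}$, which is exactly the constant in \eqref{eq:thm_exp}. Note also that with your choice $\alpha=p$ the quantity $M_n$ is an exact martingale ($q=1$ forces $C=1$), so the strict inequality $\alpha>p$ assumed in Section III-B is essential to the contraction, not a technicality.
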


The proof can be found in the supplemental material \cite{lipor2016supplemental}. In the case where $m = 2$, the above result matches that of \cite{burnashev1974interval,castro2008active} as desired. One important fact to note is that in contrast to the deterministic case, the result here is an upper bound on the number of samples required for convergence as opposed to an expected value. As this seems to be the case for all analyses of similar algorithms \cite{burnashev1974interval,castro2008active,waeber2013bisection}, we instead rely on Monte Carlo simulations to choose the optimal value of $m$. Finally, the bound here is loose. For clarity, consider the case where $p = 0$ and $m = 2$. Then the above becomes
\begin{equation*}
    \sup_{\theta \in [0,1]} \bE[|\tnhat - \theta|] \leq 2 \left( \frac{1}{2}\right)^{n/2}.
\end{equation*}
As noted in \cite{castro2008minimax}, we can see by inspection that
\begin{equation*}
    \sup_{\theta \in [0,1]} \bE[|\tnhat - \theta|] \leq \left( \frac{1}{2}\right)^{n+1},
\end{equation*}
indicating that we lose a factor of about $n/2$, even for the PBS algorithm bound in \cite{castro2008minimax}. However, in
\cite{castro2008minimax}, the authors use this result when $m = 2$ to show rate optimality of the PBS algorithm. This fact suggests that despite the discrepancy, the result of Thm~\ref{thm:pqs_rate} may still be useful in proving some sort of optimality for the PQS algorithm.

While the rate of convergence for PQS can be derived using standard techniques, the expected distance or a useful bound on the distance is more difficult. The technique used in Section~\ref{sec:DQS} becomes intractable as the values of $X_{n}$ are no longer deterministic. The approach of examining the posterior distribution after each step and calculating the possible locations has been examined, but at the $n$th measurement, there are $2^{n-1}$ possible distributions. Further, PQS as
described above has the
undesirable property that it does not always travel toward the median of the distribution\textemdash a problem we overcome in the next section\textemdash and hence the analysis of its distance is not of any practical importance.

\subsubsection{Truncated PQS}

Probabilistic quantile search as presented in Algorithm \ref{alg:PQS} is not a strict generalization of DQS in the sense that the two algorithms are not equivalent in the noiseless case. Moreover, 
in some cases, PQS will choose a sample location farther away from the current location than the median. This choice is suboptimal, as the median of the posterior is the most informative point (in an information-theoretic sense), and hence traveling a farther distance to obtain less information is contrary to our overall goal. 
For these reasons, we propose the following variant of PQS, which has a sample complexity and distance traveled no worse than the PQS algorithm in Algorithm \ref{alg:PQS}. The algorithm satisfies the statement of Thm. \ref{thm:pqs_rate} (see \cite{lipor2016supplemental}), and we show the improved performance in terms of both distance and sample complexity in Section \ref{sec:simulations}. Instead of taking a sample at the $m$-quantile of the posterior, we instead truncate the posterior distribution in such a way to maintain the median as well as guarantee that the $m$-quantile of this truncated posterior is moving our sampling location towards the median of the posterior (the most informative point).

The generalized algorithm begins by sampling at the $m$-quantile as in PQS. For subsequent samples, we first define
\begin{equation*}
    \chi = \min \set{\int_{0}^{X_{n}} \pi_{n}(x) dx, \int_{X_{n}}^{1} \pi_{n}(x) dx},
\end{equation*}
the probability in the tail of the distribution that would possibly cause us to move away from the median point. 
We then define the truncated distribution to be the normalized form of
\begin{equation*}
    \tilde{\pi}_{n}(x) = \begin{cases}
        0, & \int_{0}^{x} \pi_{n}(z) dz \leq \chi \\
        0, & \int_{x}^{1} \pi_{n}(z) dz \leq \chi \\
        \pi_{n}(x), & \text{otherwise}
    \end{cases}.
\end{equation*}
Finally, the sample location is chosen as $$X_{n+1} = \argmin_{X \in \set{\tilde{X}_{0},\tilde{X}_{1}}} \left| X_{n} - X \right|\;,$$
where
\begin{eqnarray*}
    \int_{0}^{\tilde{X}_{0}} \tilde{\pi}(x) dx = \frac{1}{m} & \text{and} & \int_{\tilde{X}_{1}}^{1} \tilde{\pi}(x) dx = \frac{m-1}{m}.
\end{eqnarray*}
Analogous to traveling ``forward'' or ``backward'' in DQS, this process guarantees that we always choose sample locations that are in the direction of the median of the posterior. This fact ensures that the information gain is at least that of the PQS algorithm, while choosing the nearer of the two locations results in a distance no greater than that of PQS. Note that we continue to use $\pi_{n}(x)$ as the posterior distribution of $\theta$ and update this distribution according to Algorithm
\ref{alg:PQS}, \ie we only use $\tilde{\pi}_{n}(x)$ when choosing the sample locations. Further, for the case of $m = 2$, this generalization and PQS are equivalent, both resulting in the PBS algorithm.

\subsubsection{Stopping Criterion}
\label{sec:stopping}

Previous work on PBS centers around the case where there is a fixed sample budget, avoiding the need for a stopping criterion for this algorithm. However in our application, while we need to reduce sampling resources, we only stop sampling once we have reached a desired accuracy. In this case, one natural choice of stopping criteria for PBS would be to stop when the distance between successive samples is smaller than some predetermined value. However, in the case of PQS with high $m$, the step size may be very small from the start, resulting in early termination. In the case of DQS, the width of the feasible interval provides a direct measure of the 
absolute error in estimating $\theta$. While there is no such width in the case of PQS, the certainty in our estimate of $\theta$ is quantified via the posterior distribution $\pi_{n}(x)$, which is discretized in our implementation. In light of this, we terminate PQS (or its generalized version) when there exists an $x_{i}$ such that $\pi_{n}(x_{i}) \geq 0.9$.

\subsection{Algorithmic Improvements}
\label{sec:improvements}

\begin{figure}[t]
    \centering
    \includegraphics[width=0.7\linewidth]{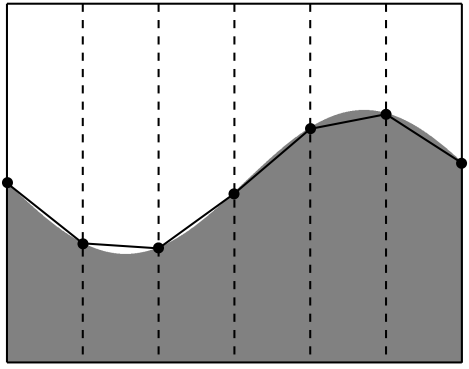}
    \caption{Example of set belonging to boundary fragment class and piecewise linear estimation of boundary.}
    \label{fig:holder}
\end{figure}

\begin{figure*}[h!]
    \centering
    \includegraphics[width=\linewidth]{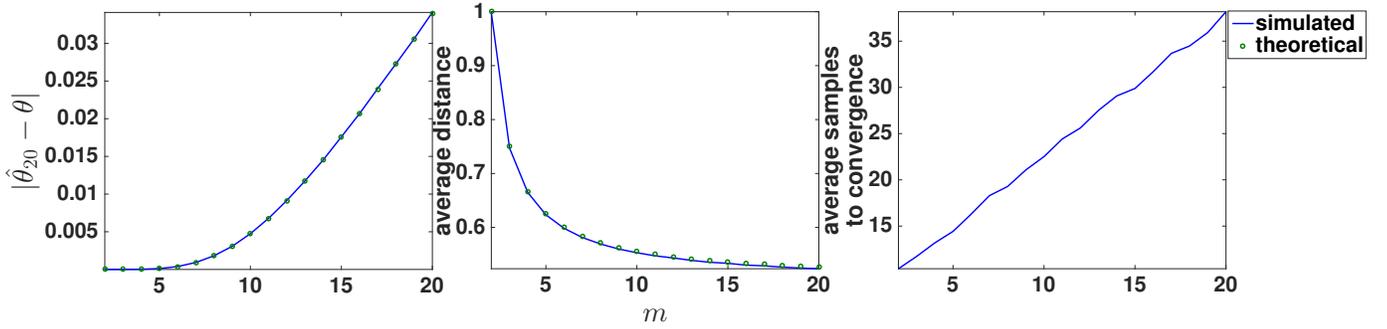}
    \caption{Simulated and theoretical values for DQS. Left-to-right: expected error after 20 samples, distance traveled before convergence to an estimation error less than $1 \times 10^{-4}$, simulated average samples required to converge to the same error.}
    \label{fig:dqs}
\end{figure*}
In this section, we describe two heuristics that can be used to reduce the sampling time. These heuristics are appropriate in the case where the decision boundary is smooth in some sense and is estimated using a series of successive quantile searches. Consider the boundary fragment class on $[0,1]^{d}$ defined informally in
\cite{castro2008minimax} as the collection of sets in which the Bayes decision boundary is a H{\"o}lder smooth function of the first $d-1$ coordinates. In $[0,1]^{2}$, this implies that the boundary is crossed at most one time when traveling on a path along the second coordinate. The boundary can be estimated by dividing the problem into strips along the first dimension, estimating the change point of each strip, and estimating the boundary as a piecewise linear function of the
estimates, as shown in Fig. \ref{fig:holder}. For simplicity, we motivate the heuristics in this section by restricting $f$ to the class of Lipschitz functions (a subset of H{\"o}lder smooth functions). Recall that a function $f: [0,1]^{d} \rightarrow
\bR$ is said to be Lipschitz with constant $L \geq 0$
if for all $x_{1} \neq x_{2}$
\begin{equation*}
    \left| f(x_{1}) - f(x_{2}) \right| \leq L \norm{x_{1} - x_{2}}.
\end{equation*}
Returning to Fig. \ref{fig:holder}, we see that a great deal of time would be wasted by returning to the origin after estimating the boundary at each strip. In this section, we leverage the assumed smoothness to intelligently initialize quantile search, resulting in significantly reduced sampling times, as shown in the simulations. 

\subsubsection{Initialization Using Previous Estimate}

Assume we split the region of interest into $K$ strips, each of which is a step function on the unit interval whose change point we wish to estimate. Let the true change point of the $k$th strip be $\theta^{k}$ and the estimate be $\hat{\theta}^{k}$. The smoothness assumption implies that $\theta^{k+1}$ is not ``too far'' from $\theta^{k}$. For example, if $f$ is Lipschitz with constant $L$ and two successive strips are located at $x_{k}$ and $x_{k+1}$, we know that $| \theta^{k} - \theta^{k+1} | / |
x_{k} - x_{k+1} | \leq L$. For this reason, our first proposed improvement is to let the first sample location of the $k+1$st strip $X_{0}$ be the previous estimate $\hat{\theta}^{k}$. Note that if we further assume a uniform prior on the subinterval allowed by the smoothness assumption, we are choosing our first sample as the minimum absolute error estimate, \ie the median of the distribution. For later reference, we refer to this initialization as Improvement 1 (I-1). We show in Section \ref{sec:simulations} that this simple heuristic dramatically reduces the required sampling time of our algorithm.

\subsubsection{Nonuniform Priors}

Our second proposed algorithmic improvement involves assigning a nonuniform prior when beginning the search. Similar to the previous improvement, we utilize the function smoothness to assign lower starting probabilities to points unlikely to lie near the decision boundary. Letting $\hat{\theta}^{k}$ again be the boundary estimate at the $k$th strip, we assign a nonuniform prior whose mean is centered around $\hat{\theta}^{k}$. We propose the use of either a
piecewise uniform or a Gaussian kernel function and refer to these as I-2.1 and I-2.2, respectively. Let the strip width $|x_{k}-x_{k+1}| = W$. We assign the prior probability for the $k+1$st strip to be either
\begin{equation*}
    \pi_{0}(x) = \begin{cases}
        c_{1}, & \left| x - \hat{\theta}^{k} \right| \leq LW \\
        c_{2}, & \left| x - \hat{\theta}^{k} \right| > LW
    \end{cases} \qquad (\mathrm{I-2.1}),
\end{equation*}
where $c_{1} > c_{2}$, or
\begin{equation*}
    \pi_{0}(x) = c_{3}\exp\left(-\frac{(x-\hat{\theta}^{k})^{2}}{2(LW)^{2}}\right) \qquad (\mathrm{I-2.2}),
\end{equation*}
where $c_{3}$ is a normalization constant so that the prior sums to 1.
We discuss the choice of $L$ and $W$ in Section \ref{sec:simulations}.

\section{Simulations \& Experiments}
\label{sec:simulations}

In this section, we show the efficacy of our algorithm through simulations. We first verify the theoretical guarantees provided in Section \ref{sec:QS} and then compare the performance of PQS with the generalized version, which we refer to as TPQS. Next, we compare our method to proactive learning from \cite{donmez2008proactive}. We then show how a series of one-dimensional searches can be used to estimate the boundary of a two-dimensional hypoxic region in Lake Erie. We conclude with experimental results from Third Sister Lake in Ann Arbor, MI.

\begin{figure*}
    \centering
    \includegraphics[width=0.7\linewidth]{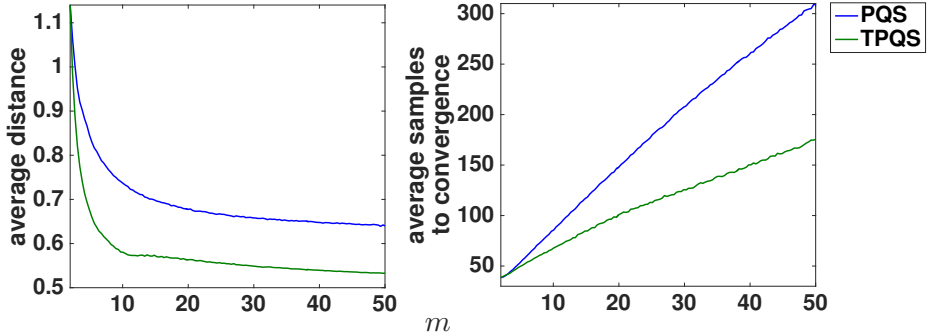}
    \caption{Average simulated values for PQS and TPQS. Left-to-right: distance traveled during estimation and number of samples required to converge.}
    \label{fig:pqs}
\end{figure*}
\subsection{Verification of Algorithms}

In this section, we verify through simulation the theoretical rate of convergence and distance traveled derived in Section~\ref{sec:DQS}. Further, we present simulated results for the PQS and TPQS algorithms and show that the desired tradeoff is achieved by both algorithms, with TPQS achieving better overall performance.

We first simulate the the DQS algorithm over a range of $m$ from 2 to 20, where $\theta$ is swept over a 1000-point grid on the unit interval. The resulting average error after 20 samples is shown in the left-most plot of Fig.~\ref{fig:dqs}, while the average distance before convergence to an error of $\varepsilon = 1 \times 10^{-4}$ is shown in the middle plot of the same figure. The figures show the theoretical values for expected error and distance match the simulated values. The
right-most plot of Fig. \ref{fig:dqs} shows the number
of samples required to converge to the same error. From the figures, our intuition is confirmed; the number of samples required is monotonically increasing in $m$, while the distance traveled is monotonically decreasing. This indicates that DQS achieves the desired tradeoff in the noise-free case. 

Next, we simulate the PQS and TPQS algorithms with error probability $p = 0.1$ over a range of $m$ from 2 to 50, where $\theta$ ranges over a 100-point grid on the unit interval with 100 random instances run for each value of $\theta$. The left-hand plot of Fig.~\ref{fig:pqs} shows the average number of samples required to converge to a mass of at least 0.9 at a single point, as described in Section \ref{sec:stopping}. As in the deterministic case, the required number of samples increases monotonically with
$m$. The right-hand plot of Fig.~\ref{fig:pqs} shows the average distance traveled before converging to the same error value. Again, the distance decreases monotonically with $m$, indicating that the algorithm achieves the desired tradeoff in the noisy case. Further, we see that TPQS outperforms PQS both in terms of samples required and distance traveled. Because of this, we consider only TPQS in all remaining simulations.

\begin{figure*}[h!]
    \centering
    \includegraphics[width=\linewidth]{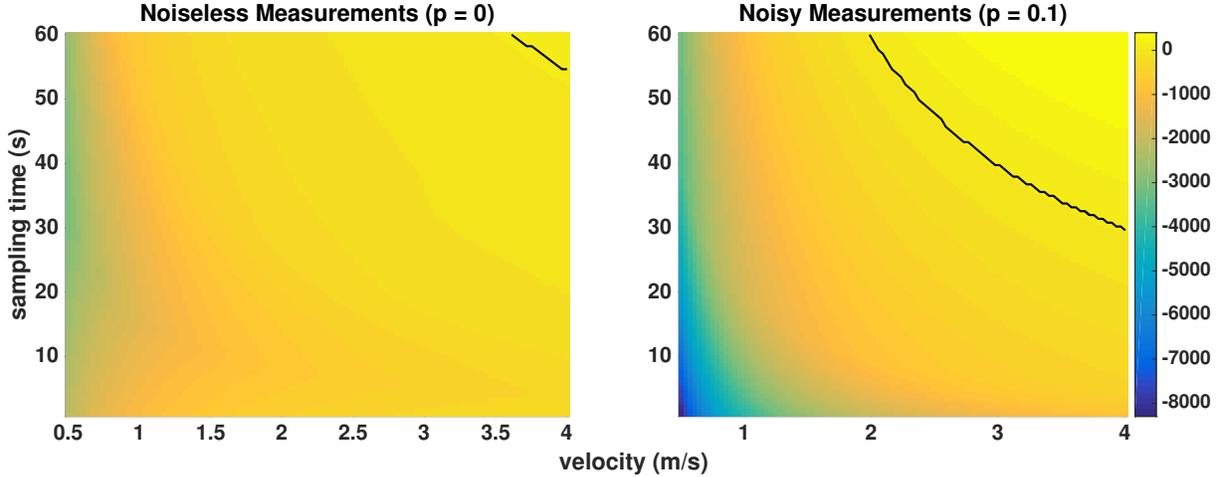}
    \caption{Difference in sampling time between quantile search and proactive learning under a variety of practical sampling regimes for both noiseless (left) and noisy (right) measurements with $p = 0.1$. Quantile search results in less required time for all points ``southwest'' of the black line.}
    \label{fig:comp}
\end{figure*}

\subsection{Application of Proactive Learning}
\label{sec:proactive}

The most competitive algorithm to quantile search is that of \cite{donmez2008proactive} applied to our problem. Of the scenarios explored in \cite{donmez2008proactive}, the most relevant is Scenario 3, in which a non-uniform cost is charged for each label. The authors propose choosing each sample location to maximize the utility $U(X)$ at each round, where utility is defined as the difference between the value of the sample at $X$ and the cost of taking that sample. The authors alternatively define utility as the ratio of value to cost, lending to a more natural interpretation that is similar to
\cite{rahimi2005adaptive}. However, we found this version to result in poor performance, and hence we rely on the first approach. To compare with quantile search, we maintain an estimate of the posterior distribution of $\theta$ as in quantile search. We define the value of a point $X$ as the mutual information $I(\theta;X,Y)$ \cite{cover2012elements}. Note that in the noiseless case, $Y$ is a deterministic function of $\theta$, and hence mutual information is a misnomer. In this case, we
still consider the reduction in entropy of $\theta$ given the measurement $Y$ taken at point $X$. The relation of binary search to communicating a noisy sequence of bits over a binary symmetric channel has been well-studied
\cite{horstein1963sequential}. In the
noiseless case, we have
\begin{equation*}
    H(\theta) - H(\theta | X,Y) = H_{b}(X),
\end{equation*}
where $H(\cdot)$ denotes the differential entropy and $H_{b}(\cdot)$ is the entropy of a Bernoulli random variable with corresponding probability $X$. 

The noisy case of Section \ref{sec:PQS} corresponds to a binary symmetric channel with non-uniform priors \cite{jiang2006multilevel}. In this case, we have
\begin{eqnarray*}
    I(\theta;X,Y) &=& H(\theta) - H(\theta | X,Y) \\
    &=& H_{b}(\phi * p) - H_{b}(p),
\end{eqnarray*}
where
\begin{equation*}
    \phi = \int_{0}^{X} \pi(x) dx.
\end{equation*}
Note that for $\phi = 1/2$ (\ie PBS), the mutual information is $1 - H_{b}(p)$, which is the capacity of a noisy binary symmetric channel. We implement the proactive algorithm from \cite{donmez2008proactive}, Eqn. (5) with two modifications in order to provide a fair comparison. First, the non-uniform cost in our case is the distance between the current location and the point under consideration, rather than the generic cost described in \cite{donmez2008proactive}. Second, we provide a tuning parameter that can be used similarly to $m$ to balance between the number of samples and distance traveled during estimation.
Pseudocode for this algorithm is given in Algorithm \ref{alg:proal}. In both the noiseless and noisy cases, we use the stopping criteria from DQS and PQS, respectively.

\begin{algorithm}[t]
    \caption{Proactive Learning with Non-Uniform Costs}
    \label{alg:proal}
    \begin{algorithmic}[1]
        \STATE \textbf{Input:} search parameter $\lambda \in [0,1]$, probability of error $p$
        \STATE \textbf{Initialize:} $\pi_{0}(x) = 1$ for all $x \in [0,1]$, $n \gets 0$
        \WHILE{not converged}
        \STATE $X_{n+1} = \argmax_{x} I(\theta;x,Y) - \lambda |X_{n} - x|$
        \STATE $Y_{n+1} \gets f(X_{n+1}) \oplus U_{n+1}$, where $U_{n+1} \sim$ Ber($p$)
        \STATE $\phi = \int_{0}^{X_{n+1}} \pi_{n}(x) dx$
        \IF{$Y_{n+1} = 0$}
        \STATE \begin{equation*}
            \pi_{n+1}(x) = \begin{cases}
                \frac{(1-p)}{\phi * p} \pi_{n}(x) & x \leq X_{n+1} \\
                \frac{p}{\phi * p} \pi_{n}(x) & x > X_{n+1}.
            \end{cases}
            \end{equation*}
        \ELSE
        \STATE \begin{equation*}
            \pi_{n+1}(x) = \begin{cases}
                \frac{p}{\phi * p} \pi_{n}(x) & x \leq X_{n+1} \\
                \frac{(1-p)}{\phi * p} \pi_{n}(x) & x > X_{n+1}.
                \end{cases}
            \end{equation*}
        \ENDIF
        \STATE $n \gets n + 1$
        \ENDWHILE
        \STATE estimate $\tnhat$ such that $\int_{0}^{\tnhat} \pi_{n}(x) = 1/2$
    \end{algorithmic}
\end{algorithm}

To obtain a profile of the performance of proactive learning, we simulate for both noiseless and noisy ($p = 0.1$) measurements, where we range $\lambda$ over 100 points on the unit interval. We let $\theta$ range over a 100-point grid, and 200 random instances are run for each value of $\theta$ in the noisy case. To compare with DQS and TPQS, we simulate the average time required to perform sampling on the unit interval under a variety of sampling times and travel times relevant to our
problem of sampling in Lake Erie. We let the time per sample $\eta$ in \eqref{eq:time} range from 1-60 s per sample. For travel time, we consider a strip length of 40 km, about half the size of the central basin of Lake Erie, and let the velocity range from 0.5-4 m/s. Fig. \ref{fig:comp} shows the difference in sampling time required by quantile search and proactive learning. The boundary of where quantile search outperforms proactive learning is shown in black, so that all points ``up
and to the right'' of the boundary denote sampling regimes in which proactive learning requires less time than quantile search. The figure shows that in the majority of relevant cases, quantile search results in superior performance. However, in the case of large sampling time and high velocity, proactive learning generally performs better. Although not shown, we analyzed figures similar to Figs. \ref{fig:dqs} and \ref{fig:pqs} and saw that the number of samples required for proactive
learning to converge reduces quickly with
$\lambda$ compared to quantile search, while the distance traveled reduces slowly. Thus, for scenarios in which sampling is significantly more costly than travel, proactive learning may be a more appropriate choice. This is likely due to the fact that proactive learning often takes comparatively large steps early in the measurement process, and investigating the properties of this algorithm is a topic for our future research.

\begin{figure}[h!]
    \begin{subfigure}[b]{\linewidth}
        \includegraphics[width=\linewidth]{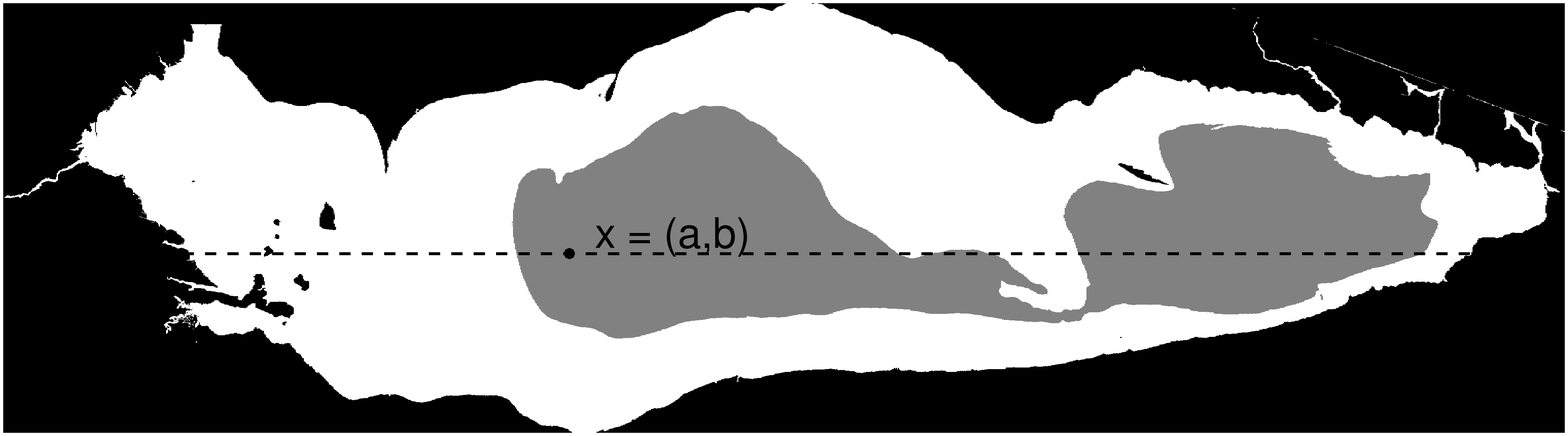}
        \caption{}
        \label{fig:erie1}
    \end{subfigure}
    \begin{subfigure}[b]{\linewidth}
        \includegraphics[width=\linewidth]{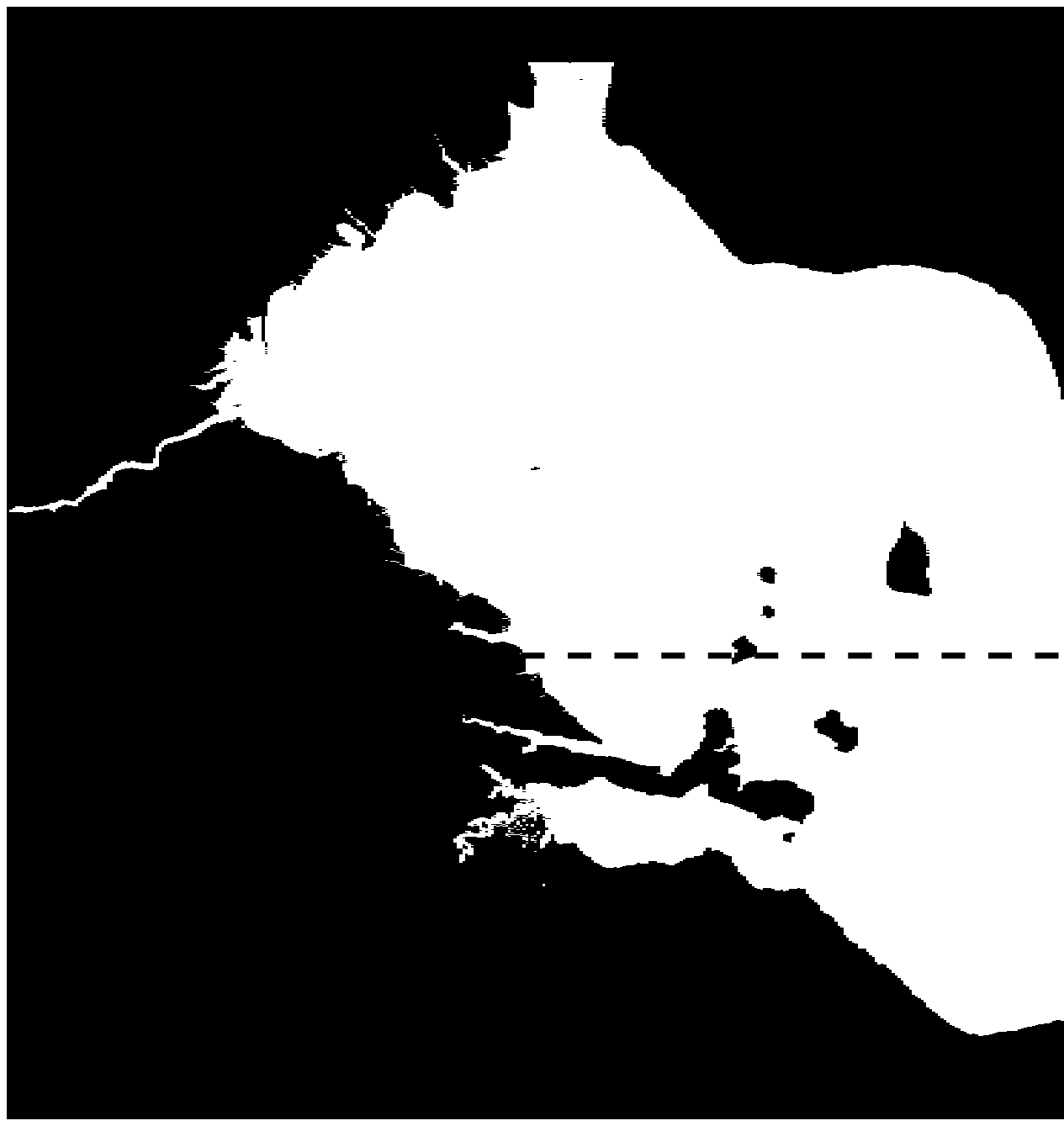}
        \caption{}
        \label{fig:erie2}
    \end{subfigure}
    \begin{subfigure}[b]{\linewidth}
        \includegraphics[width=\linewidth]{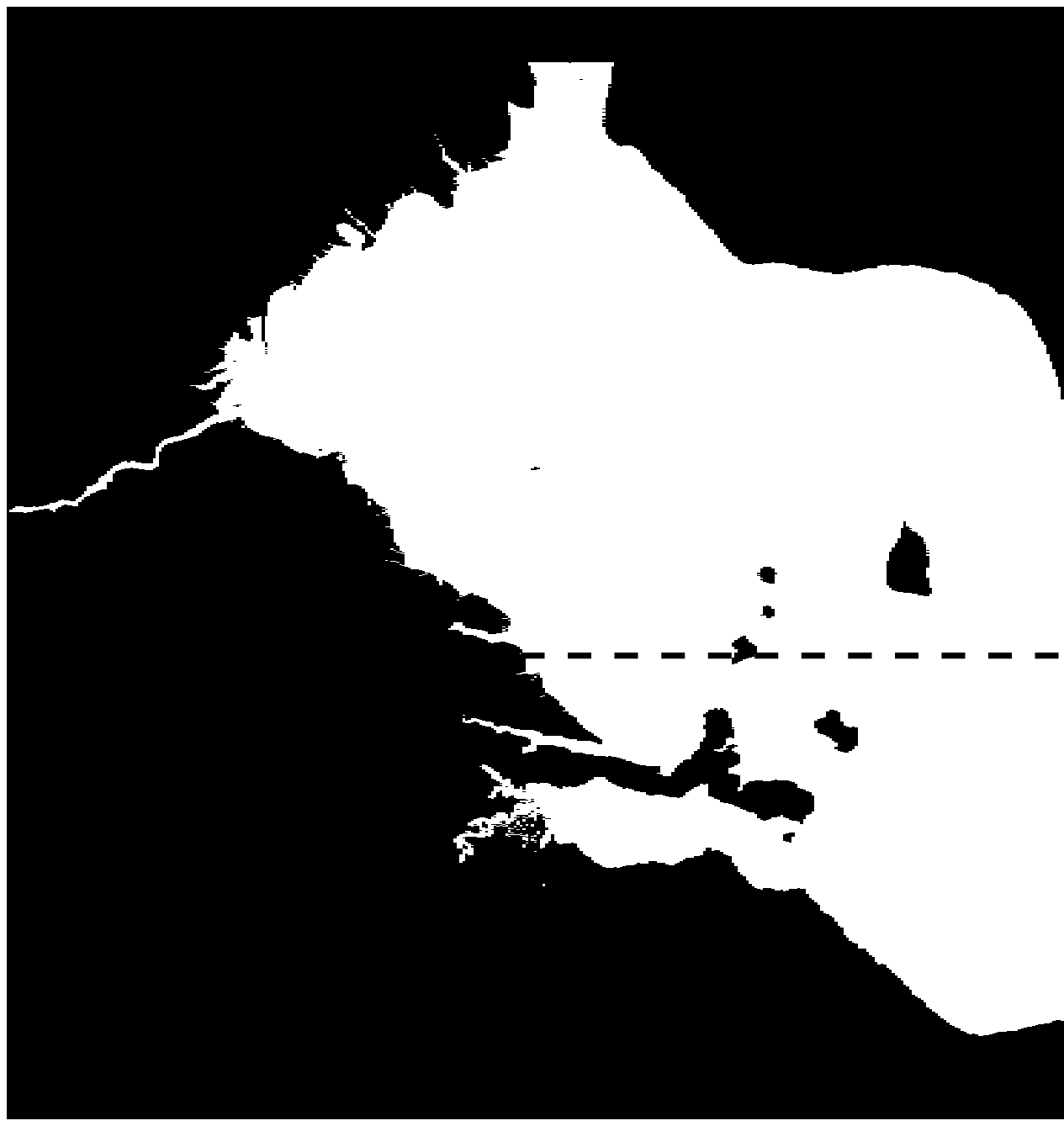}
        \caption{}
        \label{fig:erie3}
    \end{subfigure}
    \begin{subfigure}[b]{\linewidth}
        \includegraphics[width=\linewidth]{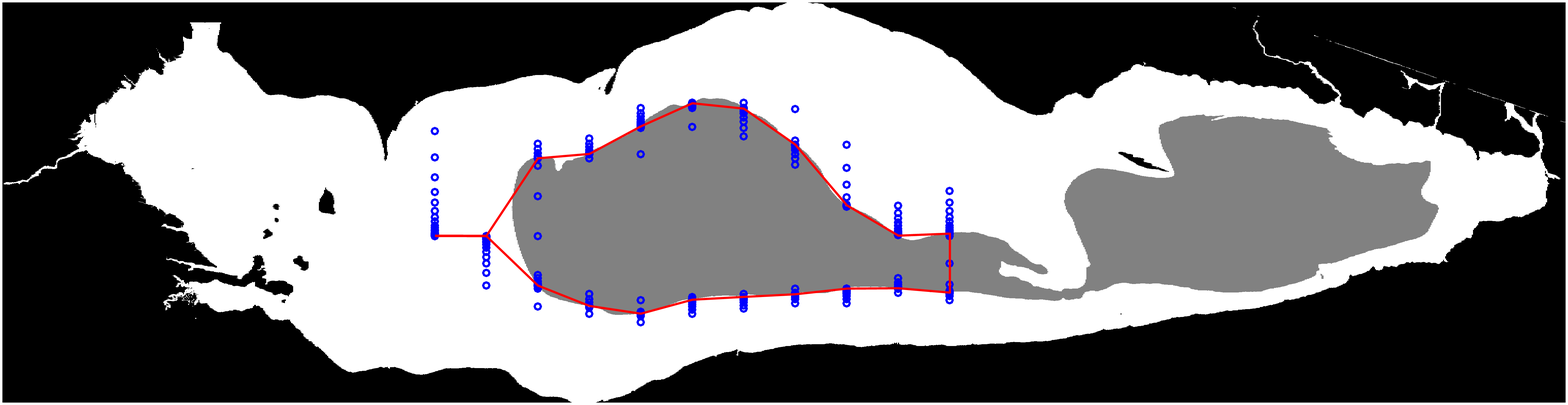}
        \caption{}
        \label{fig:erie4}
    \end{subfigure}
    \caption{Proposed sampling procedure for detection of hypoxic region in Lake Erie. (a) Lake Erie with hypoxic region illustrated in gray and split along $x = (a,b)$. (b) Division of top portion into strips. (c) Estimation procedure for top of lake with sample locations shown in blue and estimated boundary in solid red. (d) Final sample locations and estimation of entire boundary.}
\end{figure}

\subsection{Simulations on Lake Erie}

\begin{table*}[ht!]
    \centering
    \begin{tabular}{ | c | c || c || c || c || c | }
        \hline
        \multirow{2}{*}{Sampling Scenarios} & Sampling Time (s) & 60 & 60 & 10 & 10 \\
        & Velocity (m/s) & 4 & 0.5 & 4 & 0.5 \\
        \hline
        \multirow{2}{*}{Sampling Parameters} & $m$ & 9.48 & 43.00 & 43.00 & 43.00 \\
        & $\lambda$ & 0.17 & 0.13 & 0.13 & 0.10 \\
        \hline \hline
        \multirow{3}{*}{No Improvement} & Bisection & 2.14 & 15.96 & 2.00 & 15.82 \\
        & DQS & 2.62 & 19.97 & 2.52 & 19.44 \\
        & Proactive Learning & 2.84 & 21.20 & 2.67 & 20.45 \\
        \hline
        \multirow{3}{*}{I-1} & Bisection & 1.99 & 14.86 & 1.86 & 14.73 \\
        & DQS & \textbf{1.44} & \textbf{9.41} & \textbf{1.19} & \textbf{9.09} \\
        & Proactive Learning & 1.47 & 9.99 & 1.26 & 9.62 \\
        \hline
    \end{tabular}
    \caption{Total sampling time (in days) for various search methods under noiseless measurements and a variety of sampling times and velocities. Fastest time for each scenario shown in bold.}
    \label{tab:dqs}
\end{table*}

In this section, we apply the quantile search and proactive learning algorithms to the problem of sampling hypoxic regions in Lake Erie. Fig.~\ref{fig:erie1} shows the lake with an example hypoxic zone pictured in gray. In \cite{castro2008minimax}, the authors show that for the set of distributions such that the Bayes decision set is a boundary fragment, a variation on PBS can be used to estimate the boundary while achieving optimal rates up to a logarithmic factor. We now describe how the same approach can be
used to estimate the hypoxic region in Lake Erie and demonstrate the benefits of our algorithm compared to PBS and proactive learning. The results in this section differ from our previous work \cite{lipor2015quantile} in that we consider a more realistic boundary derived from bathymetry data retrieved from \cite{noaa2015bathymetry}. To simulate the boundary of interest, we threshold the bathymetry data at a depth of 21 m and consider anything at a depth of greater than 21 m hypoxic. Although
this may not be directly correlated with the hypoxic region, the resulting region is sufficiently irregular to test our algorithm and is visually similar to the regions found in \cite{zhou2013spatial}. Further, we previously considered only the time required to estimate the strips (described below) individually, whereas in this work we consider the entire sampling process.

Consider the instance of a hypoxic region shown in Fig.~\ref{fig:erie1}. 
Using models and measurements from previous years (\eg historical data from \cite{zhou2013spatial}), it is reasonable to assume we can split the lake into intervals so that the boundary does not significantly violate the boundary fragment assumption. Splitting the lake along the line $y = b$ yields the two sets above and below the dashed line in Fig.~\ref{fig:erie1}. Now we can further divide the problem into strips along the first dimension, as shown by the solid red line in Fig.~\ref{fig:erie2}. Along each of these strips, the problem reduces to change point
estimation of a one-dimensional threshold classifier as we have studied thus far. After estimating the change point at each strip, the boundary is estimated as a piecewise linear function of the estimates, as shown in Fig.~\ref{fig:erie3}. The same procedure is used for the bottom portion of the lake, with the final estimation shown in Fig.~\ref{fig:erie4}. In all cases, we choose the optimal $m$ by estimating the average number of samples and distance traveled via simulations and note the
chosen value in the tables.

\begin{table*}[t]
    \centering
    \begin{tabular}{ | c | c || c || c || c || c | }
        \hline
        \multirow{2}{*}{Sampling Scenarios} & Sampling Time (s) & 60 & 60 & 10 & 10 \\
        & Velocity (m/s) & 4 & 0.5 & 4 & 0.5 \\
        \hline
        \multirow{2}{*}{Sampling Parameters} & $m$ & 6.40 & 11.17 & 10.80 & 62.16 \\
        & $\lambda$ & 0.30 & 0.29 & 0.29 & 0.29 \\
        \hline \hline
        \multirow{3}{*}{I-1} & PBS & 2.64 & 19.00 & 2.38 & 18.61 \\
        & TPQS & 1.83 & 10.84 & 1.38 & 9.57 \\
        & Proactive Learning & \textbf{1.72} & 11.67 & 1.48 & 11.47 \\
        \hline
        \multirow{3}{*}{I-1, I-2.1} & PBS & 2.63 & 18.66 & 2.37 & 18.66 \\
        & TPQS & 1.82 & 10.85 & 1.38 & 9.58 \\
        & Proactive Learning & 1.73 & 11.69 & 1.47 & 11.44 \\
        \hline
        \multirow{3}{*}{I-1, I-2.2} & PBS & 2.58 & 18.30 & 2.33 & 18.11 \\
        & TPQS & 1.83 & \textbf{10.75} & \textbf{1.37} & \textbf{9.56} \\
        & Proactive Learning & 1.73 & 11.77 & 1.49 & 11.52 \\
        \hline
    \end{tabular}
    \caption{Total sampling time (in days) for various search methods under noisy measurements with $p = 0.1$ and a variety of sampling times and velocities. Fastest time for each scenario shown in bold.}
    \label{tab:pqs}
\end{table*}

We apply this procedure to the hypoxic region shown in Fig.~\ref{fig:erie1} using 11 strips for a variety of values for time per sample and speed of watercraft. To simulate an actual sampling pattern, we proceed counterclockwise through the strips, beginning from the top left, and record the total distance traveled and number of samples taken. We consider several sampling strategies. As a baseline, we use binary bisection with no algorithmic improvements, \ie quantile search with fixed
$m = 2$. We also show DQS with a fixed $m$ chosen to optimize the total sampling time using the average scale factor for the entire lake. Next, we show the sampling time for proactive learning with $\lambda$ chosen similarly. Finally, we consider these scenarios while employing Improvement 1, where we initialize our search algorithm using the previous boundary estimate. We forego the application of I-2.1 and I-2.2, as they will have minimal impact in the noiseless case. Table~\ref{tab:dqs} shows the resulting sampling time (in days) required to estimate the boundary of the hypoxic region. When I-1 is not in use, binary
search outperforms our algorithm. This is due to the fact that the craft must travel back to the position $1/m$ at each strip, a significant distance when $m$ is small and the boundary estimate is not near the origin. However, this problem is overcome by employing I-1, in which case quantile search requires roughly half the sampling time required by binary search. Further, DQS outperforms proactive learning, even in the scenarios where DQS requires more time on a single strip. In the case of low sampling time and low velocity, we see that DQS significantly outperforms proactive learning, which matches our expectations based on Fig. \ref{fig:comp}.

Next, we apply the same procedure to the noisy case with a probability of measurement error $p = 0.1$ averaged over 100 random instances. Due to the performance benefits shown in DQS, we employ I-1 in all sampling scenarios. For both I-2.1 and I-2.2, we choose the strip width $W$ based on the number of strips, which is a function of the desired estimation error. Choosing $W$ small will result in more accurate estimation but require more sampling time. In practice one would estimate $L$ using historical data. We estimate $L$ numerically as
\begin{equation*}
    \hat{L} = \argmax_{i} \frac{\left| f(x_{i}) - f(x_{i}+\delta) \right|}{\delta},
\end{equation*}
where we choose $\delta$ to be $0.1W$ to prevent the value of $L$ from being inflated by a single point in $f$ with high derivative. Note that since we are only using $L$ to generate priors for our search function, even an aggressive choice will not prevent our algorithm from finding the true boundary. In some cases, where the function is very smooth in many places and has high derivative in a few places, a user may wish to choose $L$ smaller than the estimated value to reduce
sampling time. In I-2.1, we choose $c_{1}$ and $c_{2}$ such that the probability within $LW$ of $\hat{\theta^{k}}$ is 100 times the probability outside this region, \ie $c_{1} = 100c_{2}$.

Table \ref{tab:pqs} shows the resulting sampling times under the various sampling scenarios. The results of the table across all sampling parameters indicate that TPQS with a Gaussian prior is the best sampling strategy in most cases. In the case of a 60 sec sampling time and 4 m/s velocity, proactive learning outperforms our algorithm, which is consistent with the results of Fig. \ref{fig:comp}. Interestingly, the use of nonuniform priors results in a small benefit in most cases. This is
likely due to the fact that the bottom half of the hypoxic region is extremely smooth, and hence our value of $L$ is not aggressive enough for these strips. A better choice may be to choose $L$ separately for the strips on top and bottom of the lake.

\subsection{Experiments on Third Sister Lake}

\begin{figure}[t]
    \centering
    \includegraphics[width=\linewidth]{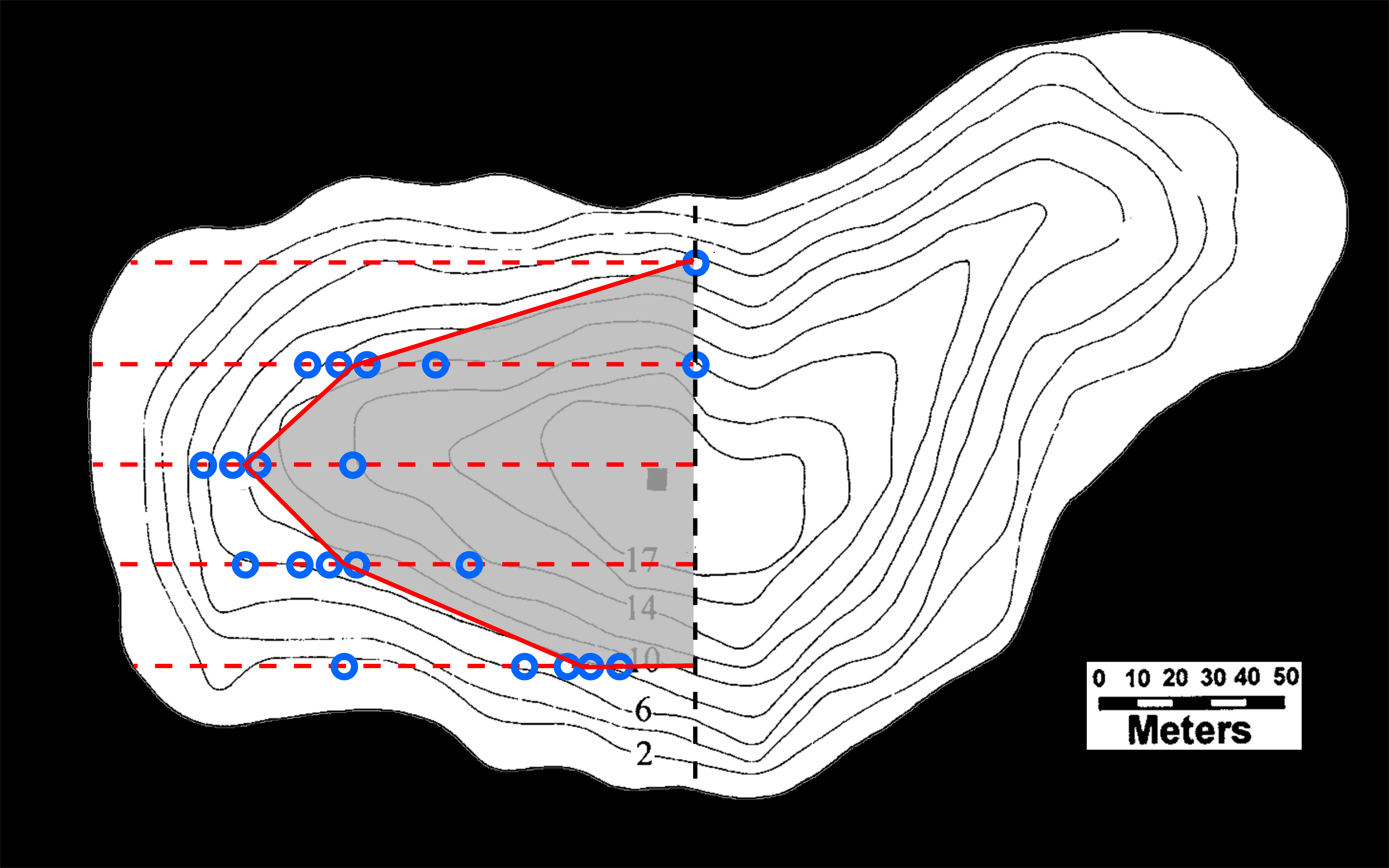}
    \caption{Delineated hypoxic region on the western half of Third Sister Lake.}
    \label{fig:experiment}
\end{figure}

In this section, we present the implementation and performance of the DQS algorithm in the field. The algorithm was tested on a robotic boat that was deployed at Third Sister Lake in Ann Arbor, Michigan. Third Sister Lake is a spring-fed kettle lake with an area of 9.4 acres and a maximum depth of 17 meters that notably exhibits hypoxic conditions on an annual basis \cite{bridgeman2000limnological}. The smaller size and calmer waters of Third Sister Lake posed an ideal test bed for evaluating the algorithm.

The robotic boat platform \cite{valada2014development} features an Android cellular phone for GPS navigation and 3G cellular communications. The prevalent cell coverage at Third Sister Lake enables bi-directional communication with the boat for remotely tracking and delineating the evolution of the hypoxic region in real-time. For a given GPS coordinate, the boat autonomously navigates to the destination to collect a sample. The platform was outfitted with a motorized winch to raise and lower a suite of water
quality sensors to measure dissolved oxygen throughout the water column at each sampling location. Due to the low noise level of these sensors, we employed the noiseless version of the algorithm with I-1.

Leveraging the persistent Internet connectivity of the robotic boat, the platform was paired with a web-service-based cyberinfrastructure \cite{wong2016real}. This enabled the same script used to develop the algorithm to be tested in the field by modifying the script to open a web connection and directly control the boat. Time-stamped location and measurement data were immediately accessible to the algorithm to direct where the boat should sample next. Taking a web-based approach provides the flexibility
more readily interface with any web-enabled robotic boat that may be more suitable for increased winds and waves of more challenging sites.

We present the results from a sampling campaign on November 17, 2015 in Fig. \ref{fig:experiment}. Third Sister Lake was divided into five horizontal strips, along which an average of five samples were taken until GPS precision could no longer distinguish between two locations. The estimated velocity of the robotic boat was 0.1 m/s. Due to the need to lower and raise the winch for each sample location, the average time to collect a sample was 300 s, resulting in an optimal sampling parameter of $m = 2$. We observed that over the course of five hours, the platform successfully identified and delineated the hypoxic
zone as directed by the algorithm. In comparison, a uniform sampling at the same resolution would take an estimated 27 hours. The successful results from the experiments on Third Sister Lake demonstrate the potential to extend this algorithm to other lake systems including Lake Erie. 

\section{Conclusions \& Future Work}
\label{sec:conclusion}

We have presented an active learning algorithm for spatial sampling capable of balancing the number of samples and distance traveled in order to minimize the overall sampling time. To the best of our knowledge, this is the only nonuniformly penalized active learning algorithm accompanied by theoretical guarantees. We have shown how our algorithm can be used to estimate a two-dimensional region of hypoxia under certain smoothness assumptions on the boundary, and empirical results indicate the
benefits of quantile search over traditional binary search as well as other active learning methods in the literature.

Several open questions remain. Deriving or bounding the expected distance for the GPQS algorithm is an important next step. The boundary fragment class mentioned here is restrictive \cite{castro2008active}, and the extension to more general cases would be of interest. The recent
work of \cite{desarathy2015efficient} describes a graph-based algorithm that employs PBS to higher-dimensional nonparametric estimation. Extending this idea to penalize distance traveled is a promising avenue for practical applications of quantile search. Finally, the PQS algorithm requires knowledge of the noise parameter $p$ in order to update the posterior. The algorithms presented in \cite{wang2016noise,ramadas2013algorithmic} enjoy the property that they are adaptive to unknown noise levels. The development
of a noise-adaptive probabilistic search would certainly be of great interest, with potential applications in areas such as stochastic optimization \cite{ramadas2013algorithmic} beyond direct applicability to this problem.

\bibliographystyle{IEEEtran}
\bibliography{IEEEabrv,Bibliography}

\end{document}